\newtheorem{thm}{Theorem}[section]
\newtheorem{cor}[thm]{Corollary}
\newtheorem{prop}[thm]{Proposition}
\theoremstyle{thm}
\theoremstyle{definition}
\newcommand{\R}{\mathbb{R}}
\newcommand{\oline}{\overline}
\newcommand{\ep}{\varepsilon}
\newcommand{\XX}{\mathcal{X}}
\begin{document}
\title{Properties of Laplacian Pyramids for \\ Extension and Denoising}
\author{William Leeb\footnote{School of Mathematics, University of Minnesota, Twin Cities. Minneapolis, MN.}}
\date{}
\maketitle
\abstract{
We analyze the Laplacian pyramids algorithm of Rabin and Coifman for extending and denoising a function sampled on a discrete set of points. We provide mild conditions under which the algorithm converges, and prove stability bounds on the extended function. We also consider the iterative application of truncated Laplacian pyramids kernels for denoising signals by non-local means.
}

\section{Introduction}
\label{sec-intro}
This paper analyzes the Laplacian pyramids (LP) algorithm for extending a function sampled on a discrete set of points to outside values. This method was introduced in the context of machine learning by Rabin and Coifman in \cite{rabin2012heterogeneous}, and is modeled after the classical Laplacian pyramids algorithm of Burt and Adelson \cite{burt1983laplacian}, which is a standard technique in image processing. The LP extension algorithm has been considered in a variety of applications \cite{dsilva2013nonlinear, chiavazzo2014reduced, spingarn2014voice, aizenbud2015pca, mishne2014multiscale, comeau2019predicting, alexander2017kernel}, and several variants have been proposed \cite{fernandez2013auto, rabin2018multiscale, rabin2019two}.

The LP algorithm constructs a multiscale decomposition of the estimated function, consisting of averaged differences at successive levels. At each level, the residuals from the previous approximation are averaged and extended to the entire domain. The level $0$ approximation is just a weighted average of the observed values. At each sampled point, the residual is then computed, and the average residual is added to form the level $1$ approximation. The residuals are computed again, and the average residuals are again added back. This process is repeated, constructing a sequence of approximations at each successive level.

A key observation driving the extension method is that to compute the average residuals, we may use a kernel that is defined on points outside the samples. That is, while the residuals necessarily make use of the observed values, the \emph{averaged} residuals are well-defined everywhere, because the averaging kernel may be computed out-of-sample. Furthermore, a different averaging kernel may be used at each level. The sequence of bandwidths defining the extent of each kernel is typically chosen to be decreasing, with a large initial bandwidth to permit wide extrapolation.

In this paper, we prove certain properties about the LP extension method. First, we show that the scheme does in fact interpolate the observed values (to arbitrarily high precision), and show how the rate of convergence, i.e.\ the number of levels used in the extension scheme, is controlled by the choice of bandwidths. In particular, we show that the scheme may converge even when the kernel bandwidths do not shrink to $0$, or equivalently, when the averaging kernels do not converge to the identity matrix on the sampled points. This permits avoiding the use of small-bandwidth kernels which can introduce spurious artifacts into the extension.

Second, we show that for certain sequences of bandwidths the algorithm is stable, in the sense that the output function is bounded in terms of the maximum value of the input data. The stability bounds we derive are analogous to the stability bound from \cite{demarchi2010stability} for classical kernel interpolation methods that involve a single kernel at one scale. Our bound increases with the ratio of the maximum bandwidth to the minimum distance between the sample points, raised to a power that scales inversely to the rate of bandwidth decay.

Third, we consider the use of iterated truncated LP kernels for signal denoising by non-local means (NL means). The two-level version of a truncated LP kernel was employed in this fashion in \cite{singer2009diffusion}, and was shown to have advantages over a traditional NL means kernel. We consider the advantages of iterating higher-step kernels as well.

Our results are mainly derived from a simple formula for the residual terms of LP at each iteration. This formula expresses the residual operator at each level as a product of differencing operators from the previous scales. Similar decompositions have previously been observed for certain examples of boosting \cite{buhlmann2003boosting, milanfar2012tour, demarchi2010stability, buhlmann2006sparse, buhlmann2006boosting, buhlmann2010boosting, buhlmann2007boosting, marzio2008boosting}, though the applicability to the LP extension algorithm appears to be new.

The rest of the paper is organized as follows. In the remainder of Section \ref{sec-intro}, we review the LP extension algorithm, and compare it to other kernel-based methods for function extension. In Section \ref{sec-analysis}, we state and prove the main analytical results, namely the factorization of the residual operators, convergence, and stability. In Section \ref{sec-denoising}, we illustrate the use of truncated LP kernels for denoising by non-local means. In Section \ref{sec-conclusion} we provide a brief conclusion.

\subsection{The Laplacian pyramids algorithm}
\label{sec-lp}

In this section, we first review the method for Laplacian pyramids extension, as described in \cite{rabin2012heterogeneous}. We are given $n$ samples $\XX = \{ x_1, \dots, x_n \}$ from $\R^p$. For any point $x \in \R^p$, we are given a family of kernels $P_0(x,x_j), P_1(x,x_j), P_2(x,x_j)\dots$, defined on $\R^p \times \XX$, which capture the affinity between points $x \in \R^p$ to the sampled points $x_j \in \XX$. In this paper we will define the affinities by a radial kernel $\Phi(r)$; that is, we first define:
\begin{align}
G_\ell(w) = \Phi(\|w\| / \sigma_\ell),
\end{align}
for some bandwidth $\sigma_\ell > 0$, and then define the kernel $P_\ell$ by
\begin{align}
P_\ell (x,x_j) = \frac{G_\ell(x-x_j)}{\sum_{j'=1}^{n} G_\ell(x-x_{j'}) }.
\end{align}
For instance, the function $\Phi(r)$ may be taken to be a Gaussian, $\Phi(r) = e^{-r^2}$ (as suggested in \cite{rabin2012heterogeneous}, and frequently used in applications). In \cite{rabin2012heterogeneous} and most applications we have seen, the sequence of bandwidths $\sigma_\ell$ are taken to be geometrically decreasing; that is,
\begin{align}
\sigma_\ell = \sigma_0 / \mu^\ell, \quad \ell \ge 0,
\end{align}
for some value $\mu > 1$; $\mu = 2$ is a typical choice.

We are given the values $y_j = f(x_j)$ of a function $f$ at the points $x_j$. Given a new point $x \in \R^p$, the LP scheme extends $f$ to $x$ by defining a sequence of approximations as follows. The first approximation to $f(x)$ is defined as
\begin{align}
s_0(x) = \sum_{j=1}^{n} P_0(x,x_j) f(x_j).
\end{align}
If $P_0$ is row-stochastic over the $x_j$'s then $s_0(x)$ is a weighted average of the observed values $f(x_j)$. We will also denote this by $f_0(x) = s_0(x)$.

At the sample points $x_j$, $f_0(x_j)$ is an average over all the points $x_1,\dots,x_n$, and so generally will not be equal to $f(x_j)$. At each sample point $x_j$, we compute the residual term defined by
\begin{align}
d_1(x_j) = f(x_j) - f_0(x_j).
\end{align}
%
By definition, $f(x_j) = s_0(x_j) + d_1(x_j)$; so our next task is to extend $d_1$ to the out-of-sample point $x$. To extend $d_1$, we use the next kernel $P_1$, defining
\begin{align}
s_1(x) = \sum_{j=1}^{n} P_1(x,x_j) d_1(x_j).
\end{align}
We now can define the level $1$ approximation to $f(x)$ as the sum of $s_0(x)$ and $s_1(x)$, namely
\begin{align}
f_1(x) = s_0(x) + s_1(x).
\end{align}

The entire procedure may now be repeated again, at every level. We construct a sequence of estimators $f_\ell(x) = s_0(x) + \dots + s_\ell(x)$, where
\begin{align}
\label{eq-s_ell}
s_\ell(x) = \sum_{j=1}^{n} P_\ell(x,x_j) d_\ell(x_j),
\end{align}
and
\begin{align}
d_\ell(x_j) = f(x_j) - f_{\ell-1}(x_j) = f(x_j) - (s_0(x_j) + \dots s_{\ell-1}(x_j)).
\end{align}

In other words, starting with the level $\ell-1$ approximation, $f_{\ell-1}(x)$, we find its residuals $d_\ell(x_j)$ at the known points, and define $s_\ell$ by approximately extrapolating these residuals everywhere using kernel $P_\ell$, and then form our refined estimate $f_\ell$ by adding the estimated residual $s_\ell$ to $f_{\ell - 1}$.

\subsection{Other kernel-based methods}
\label{sec-other}

The LP algorithm is similar to other kernel-based methods for extending functions sampled on discrete points. We mention two approaches in particular. Kernel interpolation takes a fixed radial function $G(w)$, and seeks to approximate $f$ by writing
\begin{align}
f(x) = \sum_{i=1}^{n} \alpha_i G(x - x_i).
\end{align}
Because this expression is linear in the coeffcients $\alpha_i$, they may be fit by least-squares, to ensure that $f(x_i) = y_i$ on the sampled points $x_i$. 

One drawback of this class of methods is that they may suffer from numerical instabilities due to the fitting procedure. This is especially true if the kernel $G$ is chosen to have a large bandwidth, since in this case the functions $G(x - x_i)$ may be nearly linearly dependent if the $x_i$ are too close, and the resulting linear system for the $\alpha_i$ is ill-conditioned.

An alternative approach that is used primarily in the statistics community is known as the Nadaraya-Watson (NW) estimator \cite{watson1964smooth, nadaraya1963estimating}. This takes a kernel $G$, and writes the estimated function $f$ as the weighted average of observed values:
\begin{align}
f(x) = \frac{\sum_{i=1}^{n} G(x-x_i) y_i}{\sum_{i=1}^{n} G(x-x_i)}.
\end{align}

A modification of NW is proposed in \cite{marzio2008boosting} using the method of $L_2$ boosting \cite{buhlmann2003boosting}. The residuals at each level are fit using the same kernel $G$, and the process is then iterated several times. This method can be seen as a special case of LP, where the same bandwidth $\sigma_\ell$ is used at every scale, although there is no extra work in introducing variable bandwidths. In this sense, LP and NW with $L_2$ boosting are essentially identical methods.

\subsection{Notation}

We will denote by $\overline P_k$ the $n$-by-$n$ matrix with $(i,j)^{th}$ entry $P_k(x_i,x_j)$. The matrix $\overline P_k$ is the discretization of the kernel $P_k$ on the $n$ sampled points $x_1, \dots, x_n$.

Similarly, for any function $g$ defined on all of $\R^p$, we will denote by $\overline g$ the vector of samples:
\begin{align}
\overline g = (g(x_1), \dots, g(x_n))^T.
\end{align}

We will also define the following matrices. Let $A_\ell$ be the $\ell^{th}$ level LP operator, mapping the vector $\oline f$ of observed values to the $\ell^{th}$ level approximation $f_\ell$; that is, $f_\ell = A_\ell \oline f$. Following our previous notation, denote by $\oline A_\ell$ the $n$-by-$n$ matrix whose rows are restricted to $x_1,\dots,x_n$; in this notation, $\oline f_\ell = \oline A_\ell \oline f$.

Define $S_\ell$ to be the operator mapping $\oline f$ to $s_\ell$, defined by \eqref{eq-s_ell}; that is, $s_\ell = S_\ell \oline f$. Again, we will let $\oline S_\ell$ be the $n$-by-$n$ matrix whose rows are restricted to $x_1,\dots,x_n$.

Finally, we let $D_\ell$ denote the the differencing operator $I - \oline A_{\ell - 1}$, so that $d_\ell = \oline f - \oline f_{\ell - 1} = (I - \oline A_{\ell - 1}) \oline f = D_\ell \oline f$. Note that the differencing operators are only defined on the in-sample points $x_j$, which is why we do not use extra notation in this case.

\section{Analysis of LP: convergence and stability}
\label{sec-analysis}

In this section we will address several basic questions about the LP extension algorithm. First, it is not obvious under what conditions the scheme will converge to the observed values $y_j$ on the in-sample points $x_j$. At level $\ell$ the residual vectors $d_\ell$ are averaged using the kernel $P_\ell$, and these averaged residuals are added to the approximation. To guarantee convergence of $f_\ell(x_j)$ to $y_j$, one might suppose that at high levels the residuals $d_\ell$ must be approximated arbitrarily well -- that is, that the matrices $\oline P_\ell$ should approach the identity matrix, or equivalently that the bandwidths $\sigma_\ell$ approach $0$.

As we will show, it turns out that this is not necessary. The LP scheme will interpolate the given points so long as the $\oline P_\ell$ are sufficiently close to the identity; however, they do not need to approach the identity. In particular, the sequence of bandwidths may plateau at a sufficiently small value instead of approaching $0$ and the scheme will still converge. (The convergence rate, however, will depend on the decay of the bandwidths.) We will also demonstrate on a numerical example that there can be advantages to not using arbitrarily small bandwidths, as small-bandwidth kernels may introduce high-frequency artifacts into the extension.

We will also show that under the same conditions on the bandwidths, the LP algorithm is stable. More precisely, the infinity norm of the exended function cannot exceed a constant times the infinity norm of the input values. Phrased differently, treating LP as an operator that maps the input vector $y = (y_1,\dots,y_n)^T$ to the extended function $f_K$, we show that LP is a bounded operator from $\ell_\infty$ to $L^\infty$. The bound on the operator norm we derive exhibits a similar scaling as bounds for classical kernel interpolation methods shown in \cite{demarchi2010stability}.

\subsection{Factorization of the residual operators $D_k$}

This section derives a factorization of the residual operators $D_k$, which will be used repeatedly throughout the rest of paper. A similar formula has been shown for certain boosting methods in statistics; see \cite{buhlmann2003boosting, milanfar2012tour, demarchi2010stability, buhlmann2006sparse}. For completeness we provide a self-contained statement and derivation here.

\begin{prop}
\label{prop-factor}
The operators $D_\ell$ may be factored as follows:
\begin{align}
\label{eq-factor}
D_\ell = (I - \overline P_{\ell-1}) \cdots (I - \overline P_0),
\end{align}
for each $\ell \ge 1$.
\end{prop}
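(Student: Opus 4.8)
The plan is to prove the factorization by induction on $\ell$, exploiting the recursive structure of the approximations $f_\ell$. The essential reduction is that, once everything is restricted to the in-sample points $x_1,\dots,x_n$, each step of the algorithm becomes a clean matrix identity. First I would record the base case: since $f_0(x) = \sum_j P_0(x,x_j) f(x_j)$, restricting to the samples gives $\oline f_0 = \oline P_0 \oline f$, so that $\oline A_0 = \oline P_0$ and hence $D_1 = I - \oline A_0 = I - \oline P_0$, which is exactly \eqref{eq-factor} for $\ell = 1$.

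For the inductive step, the key observation is to express the restriction $\oline S_\ell$ of the $\ell$th correction operator in terms of $D_\ell$. Since $s_\ell$ is defined by \eqref{eq-s_ell} as $s_\ell(x) = \sum_j P_\ell(x,x_j) d_\ell(x_j)$, and $d_\ell = D_\ell \oline f$ by definition of the differencing operator, evaluating at the in-sample points yields $\oline s_\ell = \oline P_\ell\, d_\ell = \oline P_\ell D_\ell \oline f$, i.e.\ $\oline S_\ell = \oline P_\ell D_\ell$. Combined with $f_\ell = f_{\ell-1} + s_\ell$, which restricts to $\oline A_\ell = \oline A_{\ell-1} + \oline S_\ell$, this produces the recursion
\begin{align}
D_{\ell+1} = I - \oline A_\ell = (I - \oline A_{\ell-1}) - \oline P_\ell D_\ell = D_\ell - \oline P_\ell D_\ell = (I - \oline P_\ell) D_\ell .
\end{align}
Telescoping this recursion down to the base case $D_1 = I - \oline P_0$ immediately gives the claimed product $D_\ell = (I - \oline P_{\ell-1}) \cdots (I - \oline P_0)$.

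The argument carries no genuine analytic difficulty; it is a bookkeeping induction. The one place that requires care — and essentially the only place an error could creep in — is the identity $\oline S_\ell = \oline P_\ell D_\ell$. Here I must check that the residual $d_\ell$ appearing inside the sum for $s_\ell$ is precisely $D_\ell \oline f$ with the same level index, and that restricting the out-of-sample kernel $P_\ell(x,x_j)$ to $x = x_i$ really reproduces the matrix $\oline P_\ell$ and not some rescaled version. Keeping the level-indexing convention $D_\ell = I - \oline A_{\ell-1}$ aligned with the approximations $f_{\ell-1}$ throughout is the main thing to watch; once that alignment is fixed, the induction closes mechanically.
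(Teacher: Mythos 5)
Your proposal is correct and follows essentially the same inductive argument as the paper: the same base case $D_1 = I - \oline P_0$, the same key identities $\oline S_\ell = \oline P_\ell D_\ell$ and $\oline A_\ell = \oline A_{\ell-1} + \oline S_\ell$, and the same recursion $D_{\ell+1} = (I - \oline P_\ell) D_\ell$. No substantive differences.
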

\begin{proof}
By definition, $A_0 = P_0$, and so $D_1 = I - \oline A_0 = I - \oline P_0$, proving the claim when $\ell=1$. We now proceed by induction. Suppose we have shown that $D_\ell = (I - \overline P_{\ell-1}) \cdots (I - \overline P_0)$ for some $\ell \ge 1$. Because $\oline A_{\ell-1} = I - D_\ell$ and $S_\ell = P_\ell D_\ell$, we have:
\begin{align}
\oline A_\ell = \oline A_{\ell-1} + \oline S_\ell = I - D_\ell + \oline P_\ell D_\ell
    = I - (I - \oline P_\ell) D_\ell
    = I - (I - \oline P_\ell)(I - \oline P_{\ell-1}) \cdots (I - \oline P_0)
\end{align}
and consequently
\begin{align}
D_{\ell + 1} = I - \oline A_\ell 
    = (I - \oline P_\ell)(I - \oline P_{\ell-1}) \cdots (I - \oline P_0),
\end{align}
proving the factorization formula for all $\ell$.
\end{proof}

\subsection{Convergence of LP}
\label{sec-convergence}

The factorization \eqref{eq-factor} of $D_\ell$ from Proposition \ref{prop-factor} has a trivial corollary, which implies convergence of the LP scheme (and bounds on its error) for a broad range of operators $P_\ell$.

\begin{prop}
The relative error of the $\ell^{th}$ level LP approximation on the $x_j$'s is bounded by:
\begin{align}
\frac{\| \overline f_\ell - \overline f\|}{\|\overline f\| } 
    \le \prod_{k=0}^{\ell-1} \|I - \overline P_k\|.
\end{align}
Here, $\|\cdot\|$ denotes any norm on $\R^p$ when applied to a vector, and the corresponding induced matrix norm when applied to a matrix.
\end{prop}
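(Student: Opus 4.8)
The plan is to treat this as the promised corollary of the factorization in Proposition \ref{prop-factor}, so the entire content reduces to rewriting the error as a product of differencing operators and then invoking submultiplicativity of the matrix norm. The first step is to identify the in-sample error with a residual vector. Since $\overline f_\ell = \overline A_\ell \overline f$ and, by the definition in the notation section, $D_{\ell+1} = I - \overline A_\ell$, I would write
\begin{align}
\overline f_\ell - \overline f = (\overline A_\ell - I)\overline f = -\,D_{\ell+1}\,\overline f .
\end{align}
In words, the error of the level-$\ell$ approximation on the sample points is, up to sign, exactly the next residual $d_{\ell+1}$ that the scheme would go on to correct.

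Next I would substitute the factorization \eqref{eq-factor}, which expresses the relevant differencing operator as the ordered product of the per-level factors,
\begin{align}
D_{\ell+1} = (I - \overline P_\ell)(I - \overline P_{\ell-1}) \cdots (I - \overline P_0),
\end{align}
turning the error into this product applied to $\overline f$. The final step is purely norm-theoretic: choosing any vector norm together with the matrix norm it induces, submultiplicativity gives
\begin{align}
\|\overline f_\ell - \overline f\| = \|D_{\ell+1}\,\overline f\| \le \Big(\textstyle\prod_{k}\|I - \overline P_k\|\Big)\,\|\overline f\|,
\end{align}
and dividing through by $\|\overline f\|$ yields the stated relative-error bound, with one factor $\|I - \overline P_k\|$ for each kernel used up through level $\ell$.

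There is no genuine obstacle here; the mathematical substance lives entirely in the factorization, and this proposition simply reads off the operator norm of that product. The only two points requiring care are bookkeeping and hypotheses. On bookkeeping, one must match the level-$\ell$ approximation to the correct differencing operator $D_{\ell+1}$, and hence to the correct collection of factors $(I - \overline P_0), \dots, (I - \overline P_\ell)$; this is exactly where an off-by-one in the index range of the product could creep in. On hypotheses, the inequality rests on submultiplicativity, which is guaranteed precisely because the matrix norm is taken to be the one induced by the chosen vector norm — this is why the statement explicitly restricts to induced norms rather than allowing an arbitrary matrix norm.
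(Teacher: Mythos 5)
Your proposal is correct and is exactly the argument the paper intends: the paper offers no explicit proof of this proposition, presenting it as a ``trivial corollary'' of the factorization in Proposition \ref{prop-factor}, and your route (write the in-sample error as $-D_{\ell+1}\overline f$, substitute the factorization, apply submultiplicativity of the induced norm) is the only natural one. The indexing point you flag is real, and in fact cuts against the statement as printed: with the paper's definitions, $\overline f_\ell - \overline f = -D_{\ell+1}\overline f = -(I-\overline P_\ell)\cdots(I-\overline P_0)\overline f$, so the product should run over $k = 0,\dots,\ell$ rather than $k=0,\dots,\ell-1$; equivalently, the stated bound with $\prod_{k=0}^{\ell-1}$ applies to $\overline f_{\ell-1}$, not $\overline f_\ell$. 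Your derivation is the consistent one.
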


\begin{cor}
\label{cor-convergence}
If for some $0 < \epsilon < 1$ and $L \ge 1$ we have $\|I - \overline P_\ell\| \le \epsilon$ for $\ell > L$, then $\oline f_\ell \to \oline f$ as $\ell \to \infty$. In fact,
\begin{align}
\| \overline f_{\ell+1} - \overline f \| \le \epsilon \| \overline f_\ell - \overline f \|,
\end{align}
for all $\ell > L$.
\end{cor}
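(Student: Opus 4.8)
The plan is to prove the quantitative one-step contraction first, since the convergence statement $\oline f_\ell \to \oline f$ follows immediately by iterating it. Everything reduces to a single recursion relating the error at level $\ell$ to the error at level $\ell-1$, and this recursion is essentially just the factorization of Proposition \ref{prop-factor} read one factor at a time. So the real content is to extract that recursion; the geometric decay and the limit are then routine.

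First I would record the identity connecting the error to the differencing operator. Since $d_{\ell+1} = \oline f - \oline f_\ell = D_{\ell+1}\oline f$ by definition, we have $\oline f_\ell - \oline f = -D_{\ell+1}\oline f$. Now I would apply Proposition \ref{prop-factor}, which gives $D_{\ell+1} = (I - \oline P_\ell) D_\ell$, i.e.\ it peels off a single leading factor. Substituting and using $\oline f_{\ell-1} - \oline f = -D_\ell \oline f$ yields the key recursion
\begin{align}
\oline f_\ell - \oline f = (I - \oline P_\ell)(\oline f_{\ell-1} - \oline f).
\end{align}
This is the heart of the argument: it says the error propagates from one level to the next simply by left-multiplication by $I - \oline P_\ell$.

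With the recursion in hand, I would take norms and use submultiplicativity to get $\|\oline f_\ell - \oline f\| \le \|I - \oline P_\ell\|\,\|\oline f_{\ell-1} - \oline f\|$. Reindexing with $\ell \mapsto \ell+1$ and invoking the hypothesis $\|I - \oline P_{\ell+1}\| \le \epsilon$, valid because $\ell > L$ forces $\ell+1 > L$, gives exactly the claimed bound $\|\oline f_{\ell+1} - \oline f\| \le \epsilon\|\oline f_\ell - \oline f\|$. Finally I would iterate this contraction starting from level $L$ to obtain $\|\oline f_\ell - \oline f\| \le \epsilon^{\ell - L}\|\oline f_L - \oline f\|$ for $\ell \ge L$, and since $0 < \epsilon < 1$ the right-hand side tends to $0$, proving $\oline f_\ell \to \oline f$.

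I do not expect a genuine obstacle here, consistent with the statement being labeled a corollary. The only point requiring care is the index bookkeeping: one must confirm that the factor governing the step from level $\ell$ to level $\ell+1$ is indeed $I - \oline P_{\ell+1}$ (not $I - \oline P_\ell$), so that the hypothesis bound is applied at an index strictly greater than $L$. Getting this alignment right is what makes the contraction hold for all $\ell > L$ rather than only in the limit.
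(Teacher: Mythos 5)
Your proof is correct and follows essentially the same route as the paper: the paper treats this as an immediate consequence of the factorization $D_{\ell+1}=(I-\oline P_\ell)\cdots(I-\oline P_0)$ via the product bound $\|\oline f_\ell - \oline f\| \le \prod_k \|I-\oline P_k\|\,\|\oline f\|$, and your one-step recursion $\oline f_\ell - \oline f = (I-\oline P_\ell)(\oline f_{\ell-1}-\oline f)$ is exactly that argument read one factor at a time, with the index bookkeeping handled correctly.
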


In particular, Corollary \ref{cor-convergence} shows that the $\oline P_\ell$ do not need to converge to the identity in order for LP to extend $\oline f$. It is enough that $\oline P_\ell$ be sufficiently close to $I$ in some norm.

We next show that when the bandwidth $\sigma_\ell$ is sufficiently small, the infinity norm of $I - \oline P_\ell$ is indeed less than $1$, allowing us to invoke Corollary \ref{cor-convergence} to show convergence. We define $\delta = \delta(\XX)$ to be the minimum Euclidean distance separating any two distinct points in $\XX$:
\begin{align}
\label{eq-delta}
\delta = \min_{1 \le i \ne j \le n} \|x_i - x_j\|_2.
\end{align}

We will assume that the radial kernel $\Phi(r)$ is decreasing as a function of $r\ge0$, and satisfies the following decay condition:
\begin{align}
\label{eq-decay}
\Phi(r) \le C r^{-q}, \quad r > 0,
\end{align}
for some parameter $q > p$ and constant $C > 0$. This family includes the Gaussian kernels (for any value of $q$). 

We then have the following result:

\begin{prop}
\label{prop-convergence}
Assume $\Phi(0) = 1$, $\Phi(r)$ decreases as a function of $r \ge 0$, and $\Phi$ satisfies condition \eqref{eq-decay}. Then for $0<\epsilon < 1$ there is a constant $c = c(p,\epsilon)$ such that 
\begin{align}
\|I - \oline P_\ell\|_\infty < \epsilon
\end{align}
if the bandwidth of $P_\ell$ satisfies $\sigma_\ell < c \delta$. In particular, if $\sigma_\ell < c \delta$ for all $\ell > L$, then $\oline f_\ell $ will converge to $\oline f$ as $\ell \to \infty$; in fact $\| \overline f_{\ell+1} - \overline f \| \le \epsilon \| \overline f_\ell - \overline f \|$ for $\ell > L$.
\end{prop}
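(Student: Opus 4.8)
The plan is to compute $\|I - \oline P_\ell\|_\infty$ directly as a maximum absolute row sum and reduce the claim to a bound on the off-diagonal mass of each row. Since $\oline P_\ell$ is row-stochastic with nonnegative entries, in row $i$ the diagonal contributes $|1 - P_\ell(x_i,x_i)| = 1 - P_\ell(x_i,x_i)$ while the off-diagonals contribute $\sum_{j \ne i} P_\ell(x_i,x_j) = 1 - P_\ell(x_i,x_i)$, so the $i$-th absolute row sum equals $2(1 - P_\ell(x_i,x_i))$. Hence
\begin{align}
\|I - \oline P_\ell\|_\infty = 2\bigl(1 - \min_i P_\ell(x_i,x_i)\bigr),
\end{align}
and the desired inequality becomes $\min_i P_\ell(x_i,x_i) > 1 - \epsilon/2$. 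Writing $P_\ell(x_i,x_i) = 1/\bigl(1 + \sum_{j \ne i} G_\ell(x_i - x_j)\bigr)$, where $G_\ell(0) = \Phi(0) = 1$ has been used, this is equivalent to requiring
\begin{align}
\sum_{j \ne i} G_\ell(x_i - x_j) < \frac{\epsilon}{2 - \epsilon}
\end{align}
for every $i$.

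Next I would control each off-diagonal term using the decay hypothesis \eqref{eq-decay}: since $G_\ell(x_i - x_j) = \Phi(\|x_i - x_j\|/\sigma_\ell) \le C(\|x_i - x_j\|/\sigma_\ell)^{-q} = C\sigma_\ell^q \|x_i - x_j\|^{-q}$, it suffices to bound $\sum_{j \ne i}\|x_i - x_j\|^{-q}$ by a constant multiple of $\delta^{-q}$. This is the heart of the argument and where I expect the real work to lie. The strategy is a standard packing estimate exploiting $\delta$-separation: partition the points $x_j$ with $j \ne i$ into annular shells $\{x : m\delta \le \|x - x_i\| < (m+1)\delta\}$ for $m \ge 1$. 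Because the open balls of radius $\delta/2$ about distinct sample points are disjoint and all lie within a slightly dilated shell, a volume comparison bounds the number of points in the $m$-th shell by $C_p\, m^{p-1}$ for a dimensional constant $C_p$. Each such point satisfies $\|x_i - x_j\|^{-q} \le (m\delta)^{-q}$, so the $m$-th shell contributes at most $C_p\, m^{p-1}(m\delta)^{-q} = C_p\,\delta^{-q}\, m^{p-1-q}$.

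I would then sum over shells. Since $q > p$ we have $p - 1 - q < -1$, so $\sum_{m \ge 1} m^{p-1-q}$ converges to a finite constant $B(p,q)$, giving $\sum_{j \ne i}\|x_i - x_j\|^{-q} \le C_p B(p,q)\,\delta^{-q}$. Combining the estimates yields $\sum_{j \ne i} G_\ell(x_i - x_j) \le C\,C_p B(p,q)\,(\sigma_\ell/\delta)^q$, so choosing
\begin{align}
c = \left(\frac{\epsilon}{(2-\epsilon)\,C\,C_p B(p,q)}\right)^{1/q}
\end{align}
ensures that $\sigma_\ell < c\delta$ forces the off-diagonal mass below $\epsilon/(2-\epsilon)$, hence $\|I - \oline P_\ell\|_\infty < \epsilon$. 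The constant $c$ depends only on $p$ and $\epsilon$, with $q$ and $C$ fixed by the kernel. The final clause of the proposition then follows immediately by invoking Corollary \ref{cor-convergence} with this $\epsilon$. The main obstacle is the packing bound in the second step: getting the shell-counting estimate right and confirming that the hypothesis $q > p$ is exactly what makes the resulting series summable.
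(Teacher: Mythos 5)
Your proposal is correct and follows essentially the same route as the paper: reduce $\|I - \oline P_\ell\|_\infty$ to $2\max_i(1 - \oline P_\ell(x_i,x_i))$ via row-stochasticity, translate the claim into the off-diagonal bound $\sum_{j\ne i} G_\ell(x_i-x_j) < \epsilon/(2-\epsilon)$, and control that sum by a packing argument over annuli using the $\delta$-separation together with the decay condition, with $q > p$ guaranteeing summability. The only (immaterial) difference is that you use arithmetic shells of width $\delta$ with the count $C_p m^{p-1}$, whereas the paper uses dyadic annuli $B(x_i,2^{k+1}\delta)\setminus B(x_i,2^k\delta)$ and sums the geometric series $\sum_k 2^{k(p-q)}$.
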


\begin{proof}
The infinity norm of a matrix is the largest $\ell_1$ norm of its rows. Since $\oline P_\ell$ is row-stochastic, this implies
\begin{align}
\|I - \oline P_\ell\|_\infty = 2\max_{1 \le i \le n} (1 - \oline P_\ell(x_i,x_i)).
\end{align}
This is less than $\epsilon$ precisely when
\begin{align}
\sum_{j \ne i} G_\ell(x_i - x_j) \le \frac{\epsilon}{2-\epsilon} \equiv \eta
\end{align}
for all $i=1,\dots,n$.

Fix a value $i$. Because the $x_j$'s are all at least $\delta$ from each other, the number of points $N_r$ contained in any ball $B(x_i,r)$ cannot exceed $(2r/\delta + 1)^p$. Indeed, since $|B(x_i,r)| = C_p r^p$ and the balls $B(x_j,\delta/2)$ are disjoint, we have
\begin{align}
N_r C_p (\delta/2)^p \le C_p (r + \delta/2)^p.
\end{align}

Consequently, setting $R_k = B(x_i,2^{k+1}\delta) \setminus B(x_i,2^k\delta)$, we have the bound:
\begin{align}
\label{eq5756}
\sum_{j \ne i} G_\ell(x_i - x_j)
&= \sum_{k=0}^{\infty} \sum_{x_j \in R_k} G_\ell(x_i - x_j)
\le C \sum_{k=0}^{\infty} (2^{k+2} + 1)^p \Phi(2^k \delta / \sigma_\ell)
\nonumber \\
& \le C_p \left(\frac{\sigma_\ell}{\delta}\right)^q \sum_{k=0}^{\infty} 2^{kp} 2^{-kq}
= C_{p}  \left(\frac{\sigma_\ell}{\delta}\right)^q,
\end{align}
where $C_p$ denotes a constant depending on the dimension $p$ and the kernel $\Phi$. The expression on the right of \eqref{eq5756} will be less than $\eta$ whenever 
\begin{align}
\sigma_\ell \le (\eta / C_{p})^{1/q} \delta,
\end{align}
which is the desired result.
\end{proof}

\subsection{Stability of LP}
\label{sec-stability}

In this section, we will show that the LP scheme is stable, in the sense that the extended function $f_K$ can be bounded by the size of the input vectors $\oline f$. We will consider the same class of radial kernel $G_\ell(x-y) = \Phi(\|x-y\| / \sigma_\ell)$ considered in Section \ref{sec-convergence}, where $\Phi(r)$ is decreasing and satisfies the decay condition \eqref{eq-decay}.

Stability estimates like the ones we will prove have been shown previously for interpolating methods of the form
\begin{align}
f(x) = \sum_{j=1}^{n} \alpha_j \Phi(\|x-x_j\|),
\end{align}
where $\Phi$ satisfies a specified decay condition, and the coefficients $\alpha_j$ are found by least squares; see, for example, \cite{demarchi2010stability}. We note, however, that the condition imposed in \cite{demarchi2010stability} does not apply to as broad a family of kernels as we assume here, specifically Gaussian kernels.

We define $\delta$ to be the minimum distance between distinct points in $\XX$, as in \eqref{eq-delta}. We first prove a general estimate.

\begin{prop}
\label{prop-stability}
Suppose LP is performed with the sequence of bandwidths $\sigma_0,\sigma_1,\dots$. Take $\sigma^* < c\delta$, where $c = c(p,1/2)$ is the constant from Proposition \ref{prop-convergence}, and suppose for some $m$,
\begin{align}
\sigma_{j} \le \sigma^*, \quad j \ge m.
\end{align}
Then for all $\ell \ge 0$ we have the bound
\begin{align}
\|\oline f_\ell\|_\infty \le C 2^m \|\oline f\|_\infty
\end{align}
where $C$ is a universal constant.
\end{prop}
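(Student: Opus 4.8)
The plan is to combine the factorization from Proposition \ref{prop-factor} with the estimate $\|I - \oline P_\ell\|_\infty < 1/2$ made available by Proposition \ref{prop-convergence} once the bandwidth has dropped below $c\delta$. First I would rewrite the in-sample approximation using $\oline A_\ell = I - D_{\ell+1}$, which is exactly the identity $D_{\ell+1} = I - \oline A_\ell$ established in the proof of Proposition \ref{prop-factor}. This gives $\oline f_\ell = (I - D_{\ell+1})\oline f$, so that by the triangle inequality, submultiplicativity of the induced norm, and the factorization \eqref{eq-factor},
\begin{align}
\|\oline f_\ell\|_\infty \le (1 + \|D_{\ell+1}\|_\infty)\|\oline f\|_\infty \le \Bigl(1 + \prod_{k=0}^{\ell}\|I - \oline P_k\|_\infty\Bigr)\|\oline f\|_\infty.
\end{align}
The whole problem is thereby reduced to a bound on the product of the factor norms that is uniform in $\ell$.

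Next I would estimate each factor $\|I - \oline P_k\|_\infty$ according to whether $k \ge m$ or $k < m$. For $k \ge m$ the hypothesis $\sigma_k \le \sigma^* < c\delta$ lets me apply Proposition \ref{prop-convergence} with $\epsilon = 1/2$ (recall $c = c(p,1/2)$), yielding $\|I - \oline P_k\|_\infty < 1/2$. For $k < m$ I would fall back on the trivial bound: since $\oline P_k$ is row-stochastic its diagonal entries lie in $[0,1]$, and the identity $\|I - \oline P_k\|_\infty = 2\max_i(1 - \oline P_k(x_i,x_i))$ from the proof of Proposition \ref{prop-convergence} gives $\|I - \oline P_k\|_\infty \le 2$ at once.

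I would then assemble the product. Among the indices $0 \le k \le \ell$, at most $m$ of them satisfy $k < m$, each contributing a factor at most $2$; every remaining factor has $k \ge m$ and contributes at most $1/2 < 1$. Hence, regardless of how large $\ell$ is,
\begin{align}
\prod_{k=0}^{\ell}\|I - \oline P_k\|_\infty \le 2^m \cdot 1 = 2^m,
\end{align}
and therefore $\|\oline f_\ell\|_\infty \le (1 + 2^m)\|\oline f\|_\infty \le 2^{m+1}\|\oline f\|_\infty$, which is the claimed bound with universal constant $C = 2$.

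The argument is essentially routine once the factorization is in hand; the only point needing care is that the bound must hold uniformly over all $\ell$, including the small values $\ell < m$ for which no factor can yet be controlled by Proposition \ref{prop-convergence}. The obstacle is thus organizational rather than technical: one must observe that the number of uncontrolled factors is capped by $m$ independently of $\ell$, so the well-behaved factors (each $\le 1/2$) can only help, and the $2^m$ growth comes solely from the first $m$ large-bandwidth kernels. This is also precisely what accounts for the factor $2^m$ in the final estimate.
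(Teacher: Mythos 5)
Your proof is correct for the statement as written, but it takes a genuinely different route from the paper, and the difference matters for scope. You bound $\|\oline f_\ell\|_\infty$ by writing $\oline f_\ell = (I - D_{\ell+1})\oline f$ and estimating the single product $\prod_{k=0}^{\ell}\|I-\oline P_k\|_\infty \le 2^m$ (at most $m$ factors of size $2$, the rest at most $1/2$), which gives the clean constant $C=2$. The paper instead uses the additive decomposition $f_\ell = \sum_{k=0}^{\ell} P_k D_k \oline f$, bounds $\|D_k\oline f\|_\infty \le 2^k\|\oline f\|_\infty$ for $k\le m$ and $\|D_{m+j}\oline f\|_\infty \le 2^{-j}\|D_m\oline f\|_\infty$ for $j>0$, and sums two geometric series. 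The payoff of the paper's route is that, since each $P_k(x,\cdot)$ is row-stochastic at \emph{every} $x\in\R^p$, the same argument bounds $\|f_\ell\|_\infty$ over all of $\R^p$ --- the out-of-sample stability that Section \ref{sec-stability} is actually after ("the extended function $f_K$ can be bounded by the size of the input vectors"). Your identity $\oline A_\ell = I - D_{\ell+1}$ lives only on the sample points, since the differencing operators $D_k$ are defined only in-sample, so your argument does not directly yield the out-of-sample bound; it proves exactly the displayed inequality for $\oline f_\ell$ and nothing more. If you want the full stability statement, switch to the additive expansion and use row-stochasticity of the out-of-sample kernels, at which point your two-regime estimate on the factors slots in unchanged.
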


Taking a geometrically-decaying sequence of bandwidths, we immediately obtain the following corollary:

\begin{cor}
Suppose $\sigma_\ell = \sigma_0 / \mu^\ell$, where $\mu > 1$. Then for all $\ell \ge 0$ we have the bound
\begin{align}
\|\oline f_\ell\|_\infty 
\le C_p \left(\frac{\sigma_0}{\delta}\right)^{t} \|\oline f\|_\infty
\end{align}
where $C_p$ is a constant depending on the dimension $p$ and the kernel $\Phi$, and where $t = \log_\mu(2)$. The same estimate also holds if $\sigma_\ell = \max\{\sigma_0 / \mu^\ell, \sigma^*\}$, where $\sigma^* < c \delta$.
\end{cor}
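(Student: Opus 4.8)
The plan is to derive both bounds directly from Proposition \ref{prop-stability}, whose conclusion $\|\oline f_\ell\|_\infty \le C 2^m \|\oline f\|_\infty$ holds as soon as one exhibits an index $m$ past which every bandwidth drops below the threshold $\sigma^*$. The entire task therefore reduces to choosing $m$ as small as the hypotheses allow and then rewriting the resulting factor $2^m$ as a power of $\sigma_0/\delta$.

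For the geometric schedule $\sigma_\ell = \sigma_0/\mu^\ell$, I am free to fix any $\sigma^* < c\delta$ with $c = c(p,1/2)$ as in Proposition \ref{prop-convergence}; take $\sigma^* = c\delta/2$. Let $m$ be the least integer with $\sigma_m = \sigma_0/\mu^m \le \sigma^*$. Solving $\sigma_0/\mu^m \le \sigma^*$ gives $m \ge \log_\mu(\sigma_0/\sigma^*)$, so I may take $m = \lceil \log_\mu(\sigma_0/\sigma^*) \rceil \le \log_\mu(\sigma_0/\sigma^*) + 1$. Since the sequence is decreasing, $\sigma_j \le \sigma_m \le \sigma^*$ for all $j \ge m$, so the hypothesis of Proposition \ref{prop-stability} is met with this $m$.

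The single quantitative step is the conversion of $2^m$ into a power of $\sigma_0/\delta$. Using the bound on $m$ together with the elementary identity $2^{\log_\mu x} = x^{\log_\mu 2}$, I obtain
\[
2^m \le 2 \cdot 2^{\log_\mu(\sigma_0/\sigma^*)} = 2\left(\frac{\sigma_0}{\sigma^*}\right)^{t} = 2\left(\frac{2}{c}\right)^{t}\left(\frac{\sigma_0}{\delta}\right)^{t}, \qquad t = \log_\mu 2,
\]
where the last equality substitutes $\sigma^* = c\delta/2$. Folding the prefactor $2C(2/c)^t$ into a single constant $C_p$ (depending on $p$ and $\Phi$ through $c$, with the $\mu$-dependence regarded as fixed) and inserting this into the bound of Proposition \ref{prop-stability} yields $\|\oline f_\ell\|_\infty \le C_p (\sigma_0/\delta)^t \|\oline f\|_\infty$, proving the first assertion. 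This is exactly the step that produces the exponent $t = \log_\mu 2$.

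For the plateaued schedule $\sigma_\ell = \max\{\sigma_0/\mu^\ell, \sigma^*\}$, I would observe that $\sigma_\ell = \sigma^*$ precisely once $\sigma_0/\mu^\ell \le \sigma^*$, that is, for $\ell \ge m$ with the very same $m$ as above; hence $\sigma_j \le \sigma^*$ for all $j \ge m$, and an identical application of Proposition \ref{prop-stability} gives the same bound. I do not anticipate any genuine obstacle: the statement is an immediate corollary, and the only content beyond bookkeeping is the logarithmic identity displayed above.
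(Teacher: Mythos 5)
Your proposal is correct and is precisely the argument the paper leaves implicit when it says the corollary follows ``immediately'' from Proposition \ref{prop-stability}: choose the first index $m$ with $\sigma_m \le \sigma^*$ for some fixed $\sigma^* < c\delta$, note $m \le \log_\mu(\sigma_0/\sigma^*) + 1$, and convert $2^m$ into $(\sigma_0/\delta)^{\log_\mu 2}$ via $2^{\log_\mu x} = x^{\log_\mu 2}$, absorbing the remaining factors into the constant. The only caveats are cosmetic and shared with the paper itself: the absorbed constant also depends on $\mu$ through $t$, and the argument tacitly assumes $\sigma_0 \gtrsim \delta$ so that $m \ge 0$ (otherwise one simply takes $m = 0$).
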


\begin{proof}[Proof of Proposition \ref{prop-stability}]
We write the expansion
\begin{math}
f_\ell = \sum_{k=0}^{\ell} P_k D_k \oline f.
\end{math}
Since $\sum_{j=1}^{n} P_k(x,x_j) = 1$, it follows that
\begin{align}
\label{eq-split12345}
\|f_\ell\|_\infty \le \sum_{k=0}^{\ell} \| D_k \oline f \|_\infty
    = \sum_{k=0}^{m} \| D_k \oline f \|_\infty
        + \sum_{k=m+1}^{\ell} \| D_k \oline f \|_\infty.
\end{align}

We bound the first term:
\begin{align}
\label{ineq2000}
\sum_{k=0}^{m} \| D_k \oline f \|_\infty
    \le 2^{m+1} \| \oline f\|_\infty.
\end{align}
Indeed, for any vector $v \in \R^p$,
\begin{math}
\| (I - \oline P_\ell) v \|_\infty \le 2 \|v\|_\infty.
\end{math}
Consequently,
\begin{math}
\|D_k \oline f\|_\infty \le 2^{k} \|\oline f\|_\infty,
\end{math}
and summing a geometric series we obtain \eqref{ineq2000}.

Next, for any $k \ge 0$, the choice of $\sigma^*$ and Proposition \ref{prop-convergence} tells us that $\| I - \oline P_{m+k}\|_\infty \le 1/2$. Consequently, for any $j > 0$ we have:
\begin{align}
\|D_{m+j} \oline f\|_\infty 
    \le \|D_m \oline f\|_\infty \prod_{k=0}^{j-1} \|I - \oline P_{m+k}\|_\infty
\le \|D_m \oline f\|_\infty 2^{-j}.
\end{align}
From summing a geometric series we then obtain the bound
\begin{align}
\label{ineq2500}
\sum_{k=m+1}^{\ell} \| D_k \oline f \|_\infty 
\le \|D_{m} \oline  f\|_\infty
\le 2^m \|\oline f\|_\infty.
\end{align}

Combining \eqref{eq-split12345}, \eqref{ineq2000} and \eqref{ineq2500} yields the result.
\end{proof}

\subsection{Example: interpolation on the circle}

In this section we demonstrate on a numerical example how LP can result in qualitatively different extensions depending on the choice of bandwidth sequence. In particular, kernels with small bandwidths are close to the identity on the sampled points $x_j$, and so can introduce high-frequency components into the extension not present in the original data, even when they perfectly interpolate the observed values.

%
%
\begin{figure}[h]
\center
\includegraphics[scale=.4]{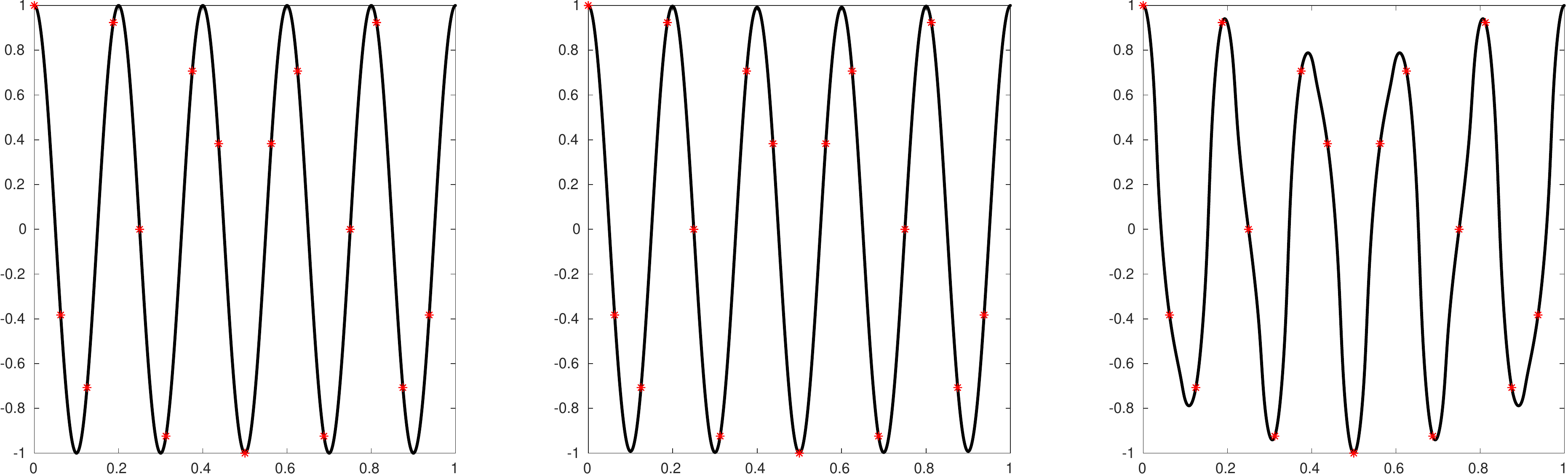}
\caption{Left: The function $f(x) = \cos(10 \pi x)$. Middle: The LP reconstruction with bandwidths $\sigma_\ell = \max\{2^{-\ell+1},1/2\}$. Right: The LP extension with bandwidths $\sigma_\ell = 2^{-\ell+1}$. On all figures the sampled points are highlighted in red.}
\label{fig-cosine}
\end{figure}

To illustrate this phenomenon, we sample $n = 16$ equispaced points $x_k = k/n$ from the circle $S^1 \subset \R^2$ of circumference $1$, and evaluate the function $f(x) = \cos(10 \pi x)$. In this case, there is enough information from the samples to perfectly interpolate $f$ on the entire circle. We plot the function $f$ in the left panel of Figure \ref{fig-cosine}, along with the sampled values.

In the right panel of Figure \ref{fig-cosine}, we plot the LP extension of $f$ using the geometrically-decreasing sequence of bandwidths $\sigma_\ell = 2^{-\ell+1}$, $\ell \ge 0$. In the middle panel of Figure \ref{fig-cosine}, we plot the LP extension of $f$ using the sequence of bandwidths $\sigma_\ell = \max\{2^{-\ell+1},1/2\}$, $\ell \ge 0$; in other words, the bandwidths plateau after the third scale. The relative errors are, respectively, $2.14 \times 10^{-1}$ and $6.14 \times 10^{-3}$. The geometrically-decreasing sequence requires only 6 levels until convergence to machine precision (approximately $10^{-14}$ in this case) on the sampled values, whereas the plateaued sequence requires 136 levels until convergence.

The reason for the higher error in the first scheme is that kernels with smaller bandwidth put more weight on the higher frequencies. In other words, these kernels introduce greater aliasing into the reconstruction. By choosing the plateaued sequence of bandwidths, we are able to mitigate the aliasing, at the expense of introducing more levels into the reconstruction.

\subsection{Example: extrapolation from an interval}

In this example, we illustrate the stability estimate from Proposition \ref{prop-stability} on an example. We take $n=16$ equispaced points on the interval $[0,1]$, and assign them alternating values $\pm 1$, so that $y_0=1$, $y_1=-1$, and so forth. We apply the LP extension procedure for geometrically decreasing bandwidths, $\sigma_\ell = \sigma_0 / \mu^\ell$. We plot an example of the extrapolated function, for $\mu=2$ and $\sigma_0=1$, in Figure \ref{fig-extrap}.

%
%
\begin{figure}[h]
    \centering
    \begin{subfigure}[t]{0.4\textwidth}
        \centering
        \includegraphics[scale=.4]{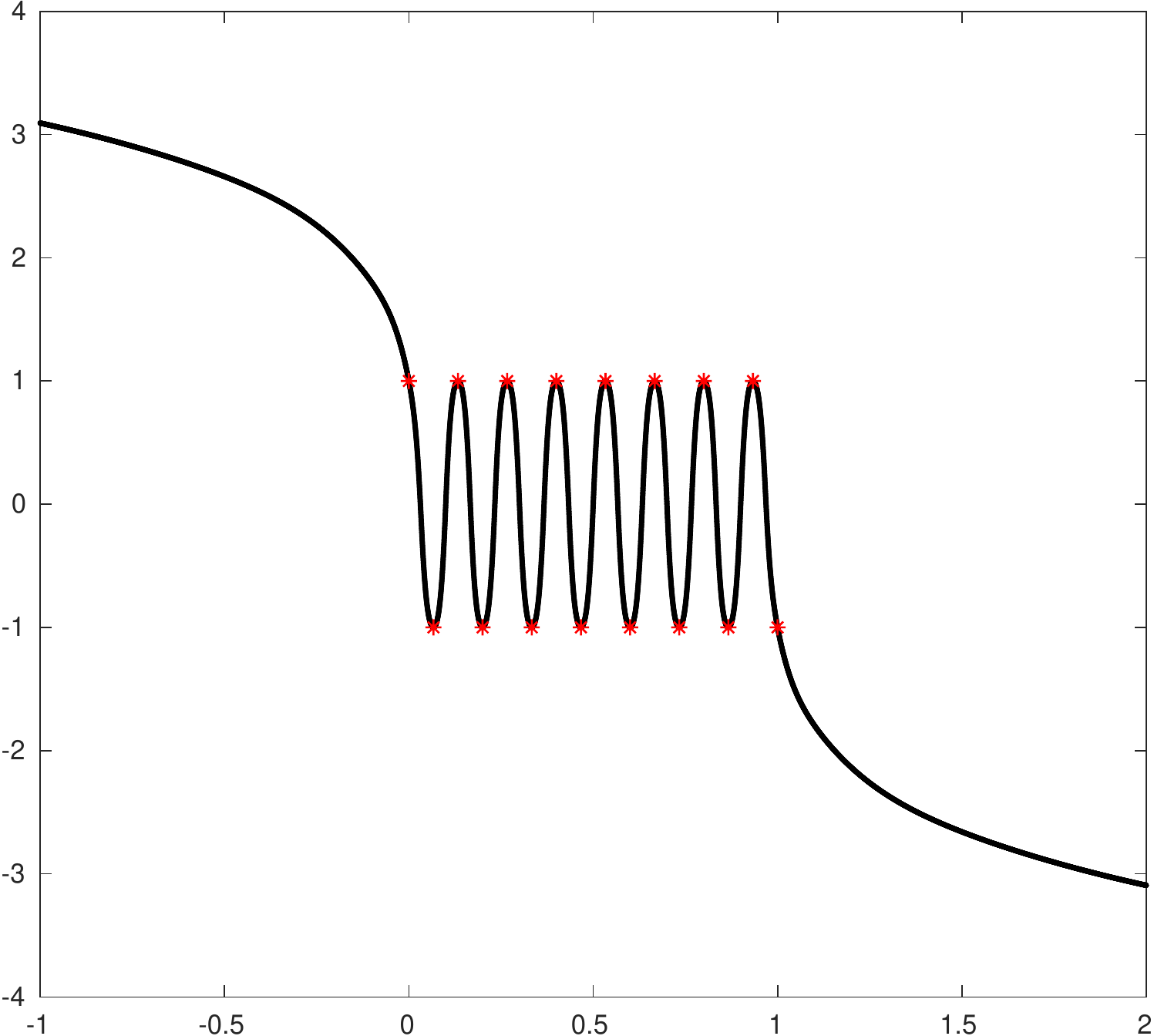}
    \end{subfigure}%
    \begin{subfigure}[t]{0.4\textwidth}
        \centering
        \includegraphics[scale=.4]{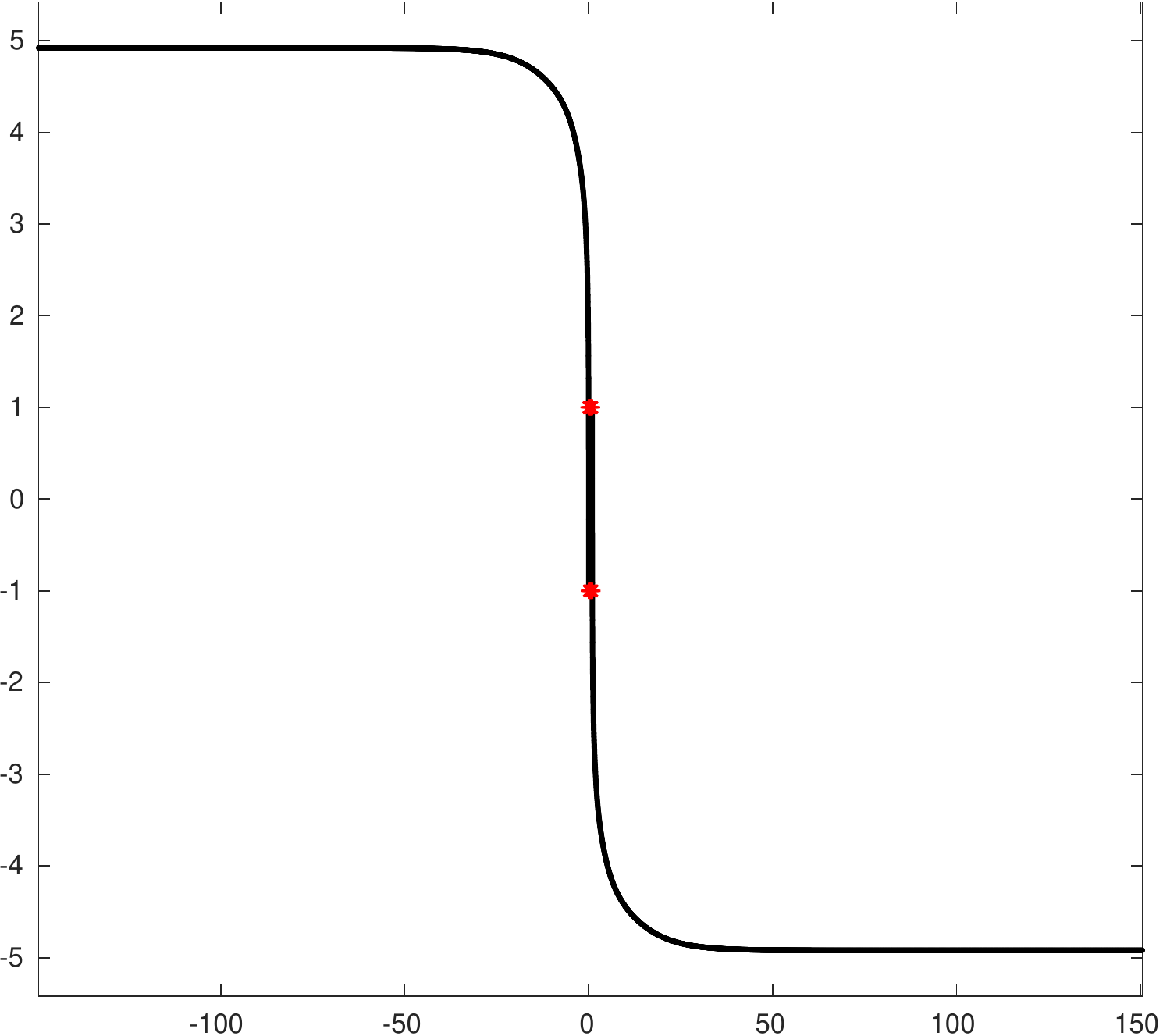}
    \end{subfigure}
    \caption{The extrapolated function, displayed at different scales. The observed values are highlighted in red.}
\label{fig-extrap}
\end{figure}

We are interested in exploring the size of the extrapolation as functions of the parameters $\sigma_0$ and $\mu$. Proposition \ref{prop-stability} predicts that larger values of $\sigma_0$ and smaller values of $\mu$ will result in larger extrapolated values. In the left panel of Figure \ref{fig-max}, we plot the maximum value (to within precision $10^{-7}$) of the extrapolated function as a function of $\sigma_0$. Indeed, we see that as $\sigma_0$ grows, the infinity norm of the extrapolation increases.

Similarly, in the right panel of Figure \ref{fig-max} we plot the maximum value (to within precision $10^{-7}$) of the extrapolated function as a function of the decay rate $\mu$. The infinity norm of the extrapolation increases with decreasing $\mu$. Again, this is the qualitative behavior expected from Proposition \ref{prop-stability}.

%
%
\begin{figure}[h]
    \centering
    \begin{subfigure}[t]{0.4\textwidth}
        \centering
        \includegraphics[scale=.5]{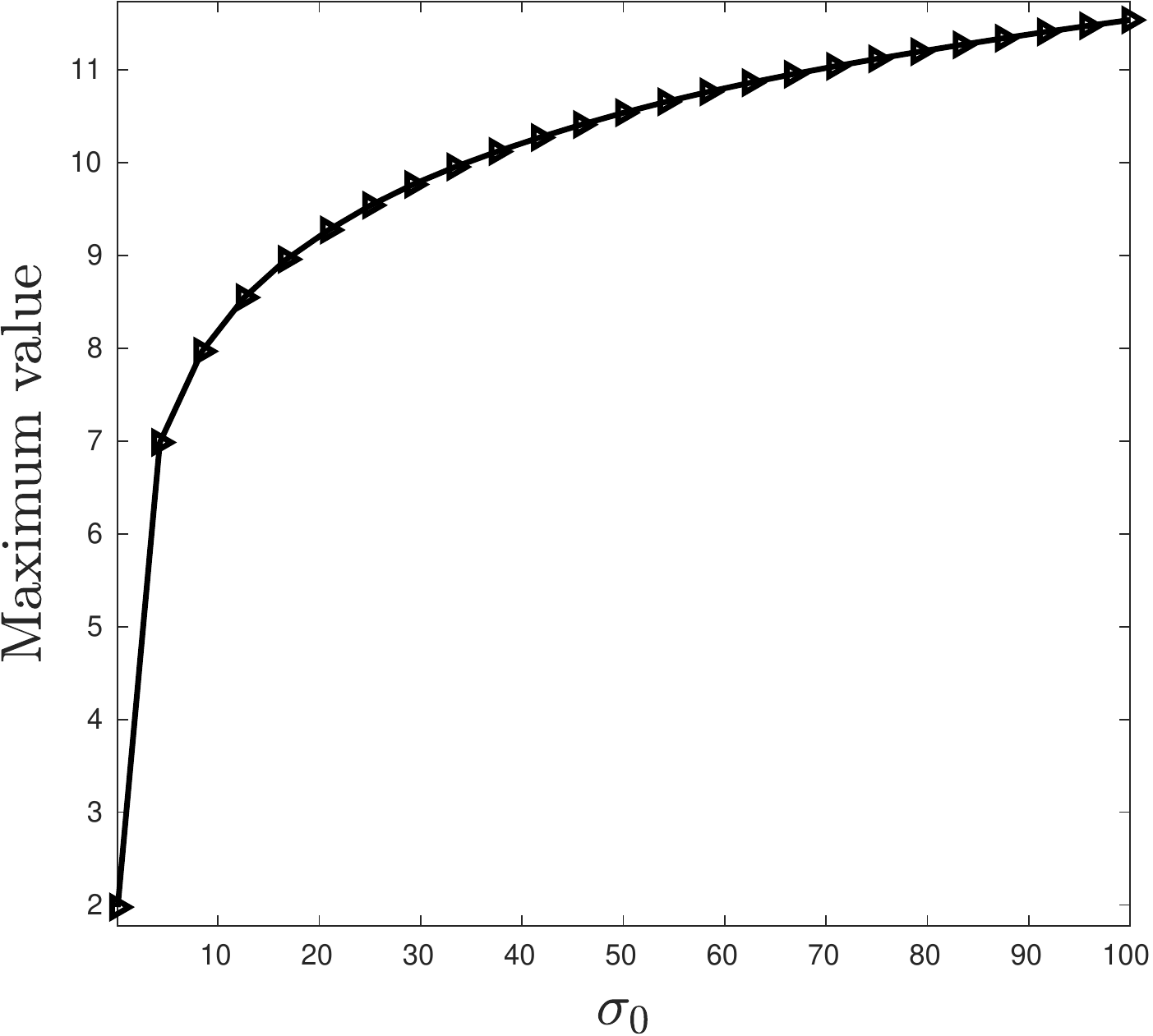}
    \end{subfigure}%
    \begin{subfigure}[t]{0.4\textwidth}
        \centering
        \includegraphics[scale=.5]{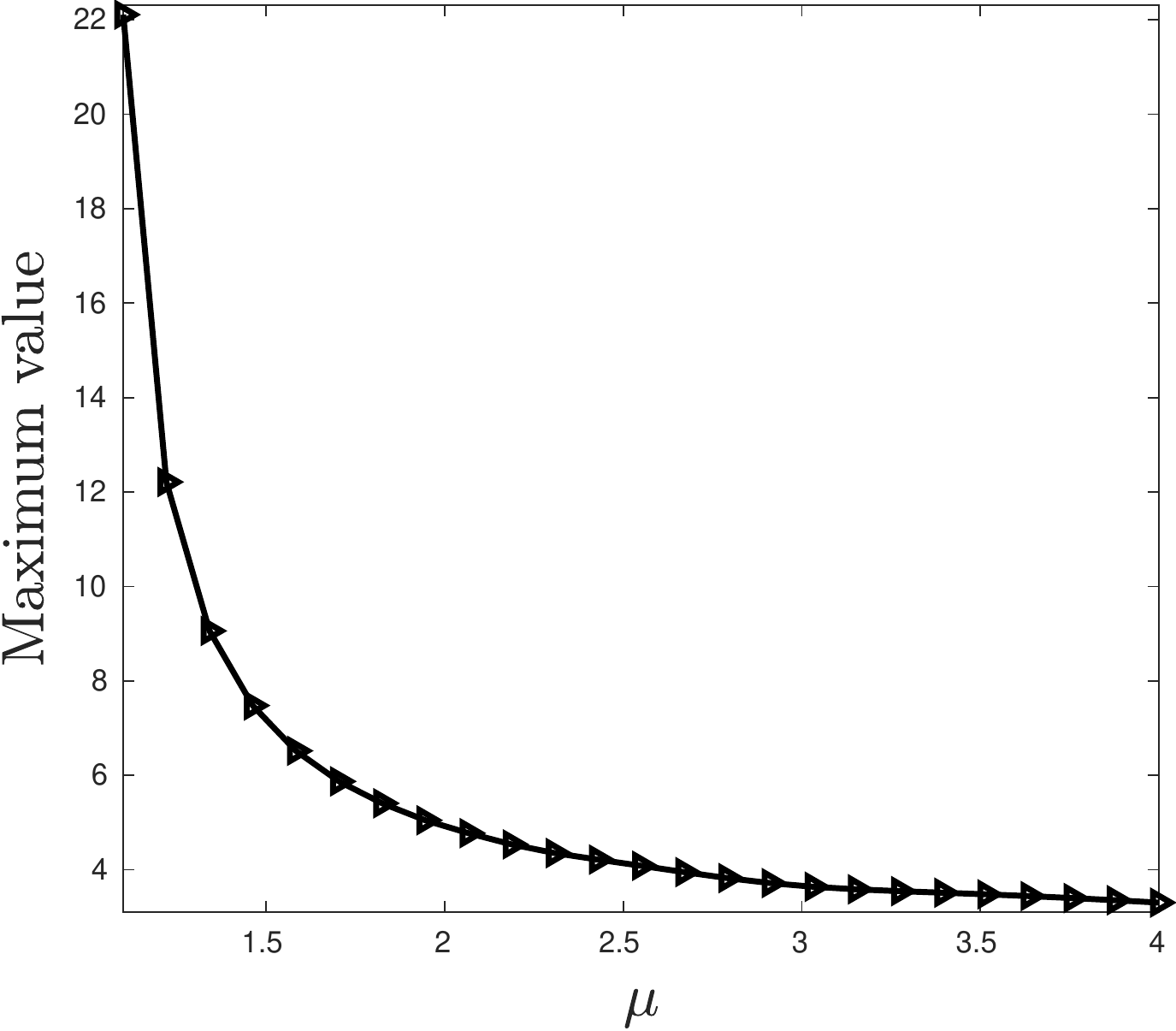}
    \end{subfigure}
    \caption{Left: The maximum value of the extrapolation as a function of $\sigma_0$. Right: The maximum value of the extrapolation as a function of $\mu$.}
\label{fig-max}
\end{figure}

\section{Laplacian pyramids and denoising}
\label{sec-denoising}

In this section, we consider the problem of denoising the in-sample observations, rather than extending the function to new values. In \cite{fernandez2013auto}, it is proposed that when the observed data is noisy, the LP algorithm should be truncated before convergence to avoid overfitting; a method that approximates cross-validation is used to determine the stopping level. If $K$ levels are used, then from Proposition \ref{prop-factor}, the denoised vector is $\oline A_K y$, where $y$ is the observed vector and 
\begin{align}
\oline A_K = I - (I - \oline P_{K}) \cdots (I - \oline P_{0}).
\end{align}
If each $\oline P_\ell$ is row-stochastic, then so too is the denoising kernel $\oline A_K$.

In this special case where $\oline P_K = \dots = \oline P_0 = Q$, the denoising kernel $\oline A_K$ takes on a particularly simple form. Changing notation to $Q_K \equiv \oline A_K$, we have
\begin{align}
Q_K = I - (I - Q)^K.
\end{align}
(Note that $Q = Q_1$.) As we noted in Section \ref{sec-other}, $Q_K$ is the same kernel used when applying $L_2$ boosting to kernel regression, as described in \cite{marzio2008boosting}. In this section, we will consider using the kernels $Q_K$ in the context of non-local means denoising \cite{buades2005review, buades2005nonlocal}. We will first review the basic non-local means algorithm, and then compare the use of the iterated kernels $Q_K$ within the non-local means framework.

\subsection{Non-local means}
Given a signal $s \in \R^M$, we suppose that we observe $s$ in the presence of noise:
\begin{align}
y = s + \ep
\end{align}
where the entries of $\ep$ are noise, e.g.\ shot noise or Gaussian. NL means (in its simplest incarnation) performs the following procedure to remove the noise $\ep$. First, patches of adjacent samples (or pixels, in the case of an image) are extracted from the long signal; we call these vectors $x_1,\dots,x_n$. We will suppose each $x_i \in \R^m$, where $m \ll M$.

Second, an affinity between the patches $x_i$ is defined. For concreteness, we will use the common choice of a Gaussian kernel to specify the affinity, writing
\begin{align}
G(x_i,x_j) = \exp\{-\|x_i - x_j\|^2 / \sigma^2\}
\end{align}
where $\sigma > 0$ is a specified parameter.

Third, the affinities $G(x_i,x_j)$ are normalized to form the row-stochastic matrix $Q$:
\begin{align}
Q(x_i,x_j) = \frac{G(x_i,x_j)}{\sum_{j'} G(x_i,x_{j'})}.
\end{align}

With this Markov kernel now defined, one iteration of NL means is performed by taking
\begin{align}
s_{NL}^{(1)} = Q y.
\end{align}
In words, each entry of $y$ is replaced by a weighted average of the other entries, where the weights are determined by local patches.

Of course, this process can be iterated multiple times by repeated application of $Q$. In this way, we obtain a sequence of denoised images:
\begin{align}
s_{NL}^{(\ell)} = Q^\ell y.
\end{align}

A physical interpretation of this algorithm is provided in \cite{singer2009diffusion} . $s_{NL}^{(\ell)}[i]$ is equal to the expected value of a random process that takes $\ell$ steps along the patches $x_j$ starting at patch $x_i$, with transition probabilities specified by $Q$, where the value of the process at patch $x_j$ is $y_j$.

\subsection{The choice of kernel}

The transition probabilities along the patches $x_j$ can be specified by any Markov matrix, not just the local diffusion matrix $Q$. In particular, \cite{singer2009diffusion} proposes the alternative matrix
\begin{align}
\label{eq-A2}
Q_2 = 2 Q - Q^2 = I - (I - Q)^2.
\end{align}
As we have seen, the kernel $Q_2$ is equal to a two-step truncated LP kernel, or equivalently a two-step $L_2$ boosting kernel \cite{buhlmann2003boosting, marzio2008boosting}. As has been observed previously in \cite{buhlmann2003boosting, milanfar2012tour, marzio2008boosting}, $Q_2$ is also equal to the ``twicing'' kernel introduced by Tukey \cite{tukey1977exploratory}. It is illustrated in \cite{singer2009diffusion} on several examples that iteratively applying $Q_2$ may achieve better denoising than iteratively applying the original kernel $Q_1 \equiv Q$.

%

%
%
\begin{figure}[h]
\center
\includegraphics[scale=.4]{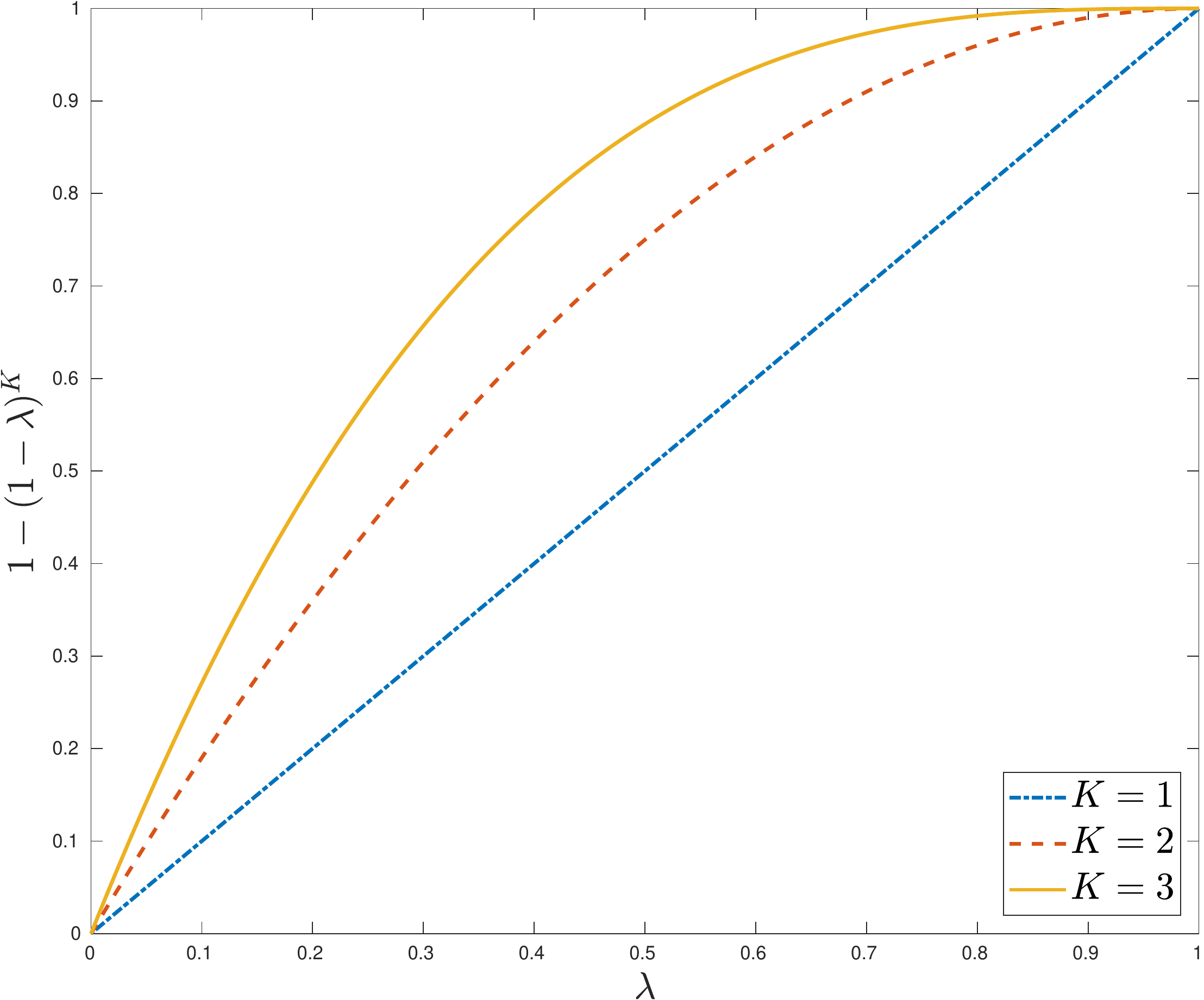}
\caption{The eigenvalues of $Q_K$ as functions of the eigenvalues of $Q$.}
\label{fig-curves}
\end{figure}

Of course, one may also consider running NL means by iteratively applying the higher-step LP kernels $Q_K = I - (I - Q)^K$ as well. Because $Q$ is diagonalizable with eigenvalues contained between $0$ and $1$, the truncated LP kernels $Q_K$ are also row stochastic, with eigenvalues
\begin{align}
1 - (1-\lambda)^K, \quad \lambda \in \text{spec}(Q).
\end{align}

In Figure \ref{fig-curves}, we plot the functions $1 - (1-\lambda)^K$ for several values of $K$. Larger values of $K$ result in kernels $Q_K$ closer to the identity $I$. Consequently, the iterations of NL means will converge more slowly to 0, allowing a more refined denoising procedure.

\subsection{Example: step function}

We illustrate the behavior of NL means with different kernels for denoising a 1D signal. The signal $s$, which was considered in \cite{singer2009diffusion}, has length $M=100$, which assumes two values, $-1$ and $+1$, and is observed with additive Gaussian noise of standard deviation $0.5$. The signal is plotted in the left side of Figure \ref{fig-step}, and the signal with noise is plotted in the right side.

%
%
\begin{figure}[h]
\center
\includegraphics[scale=.5]{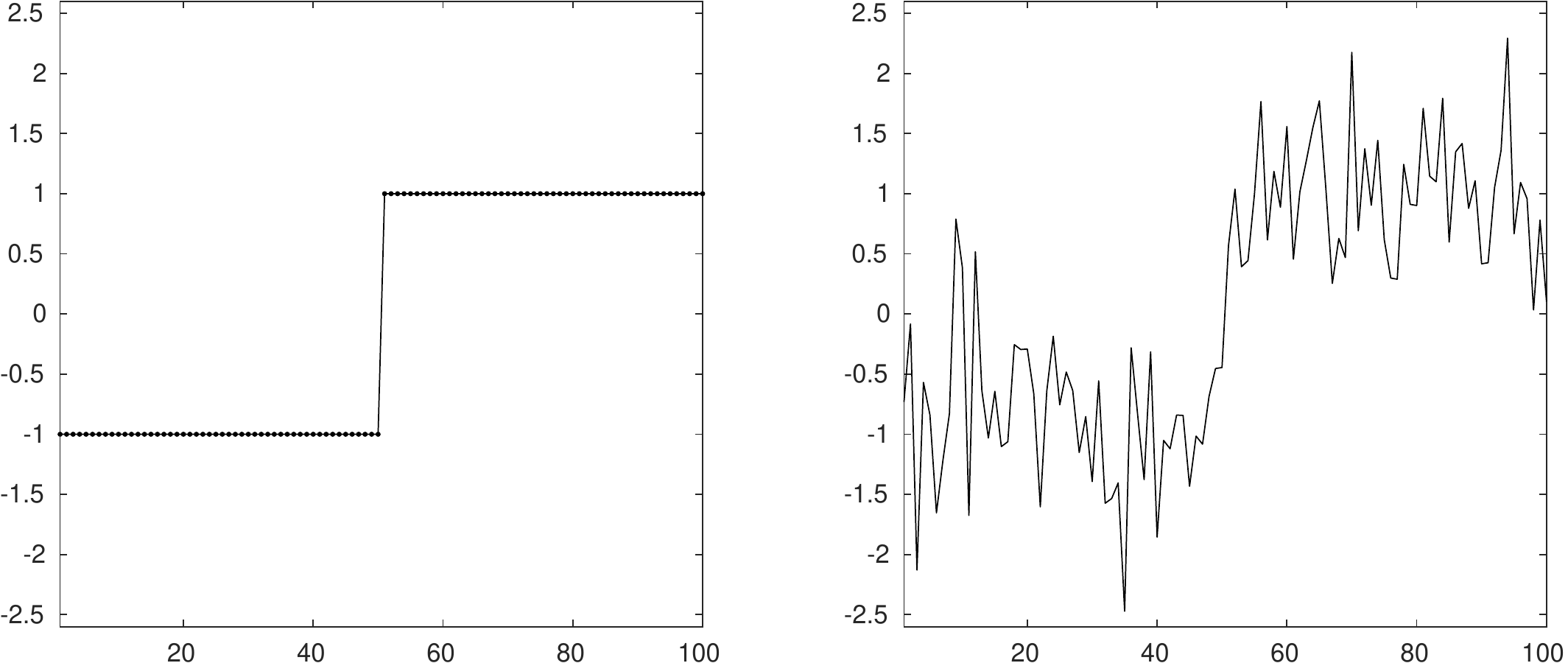}
\caption{Left: The clean step function. Right: The step function with Gaussian noise with standard deviation $0.5$.}
\label{fig-step}
\end{figure}

%
%
\begin{figure}[h]
\centering
\includegraphics[scale=.35]{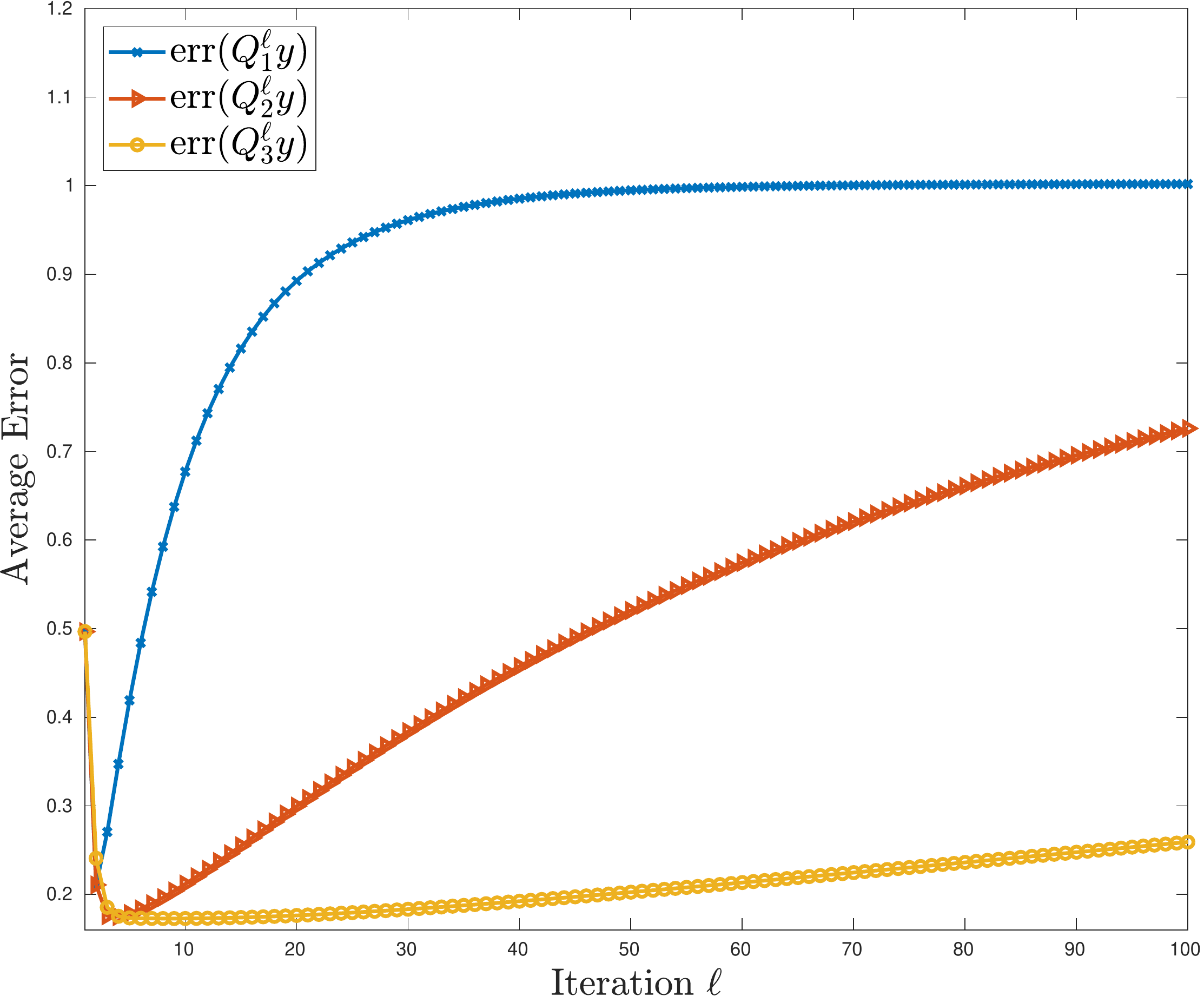}
\caption{The average errors when denoising by kernels $Q_1^\ell$, $Q_2^\ell$, and $Q_3^\ell$, as a function of the iteration $\ell$. Errors are averaged over 500 runs.}
\label{fig-errs}
\end{figure}

We build the NL means kernel using subintervals of size $m=3$. The Gaussian kernel matrix for the step function is built using the median squared distance between all pairs of points divided by $3$. This scaling value is somewhat arbitrary, and was manually chosen to ensure that the graph defined by $Q$ is not too connected.

In Figure \ref{fig-errs}, we plot the errors of NL means as a function of the number of iterations, for kernels $Q_K$ with parameters $K=1,2,3$. That is, we plot:
\begin{align}
\text{err}(Q_K^\ell y) = \frac{\|Q_K^\ell y - s\|_2}{\|s\|_2},
\end{align}
as a function of the iteration number $\ell$, where $s$ is the step function. For comparison, we also plot the average errors $\text{err}(Q_\ell y) = \|Q_\ell y - s\|_2 / \|s\|_2$ of applying the non-iterated LP kernels $Q_\ell$, as a function of the level $\ell$; these are the iterates we obtain by boosting. We emphasize that the scheme we propose uses a fixed value of $K$, and iteratively applies $Q_K$; the resulting denoising kernel is then $Q_K^\ell$, where $\ell$ is the number of iterates. The curves displayed are averaged over 500 runs of the experiment, where each experiment is run with a different realization of the noise. The minimal errors of $Q_2^\ell$ and $Q_3^\ell$ (over $\ell$) are both smaller than the minimal error for $Q_1^\ell$. The average minimal error for $Q_1^\ell$ is $0.210$, while they are $0.171$ and $0.168$ for $Q_2^\ell$ and $Q_3^\ell$, respectively. 

For any choice of Markov kernel, the iterations of NL means will both average out the noise and the signal. While the effect of the noise will be reduced, it will also result in smoothing of the signal by shrinking all the values towards the mean. In other words, increasing the iterations will increase the bias and decrease the variance. In general, given only the noisy signal $y$, it may be difficult to estimate the optimal number of iterations that minimizes the overall error.

In light of these considerations, while the minimal errors achieved by $Q_2^\ell$ and $Q_3^\ell$ are nearly identical, more interesting is that, because it takes longer for the spectrum of $Q_3^\ell$ to decay, there is a much larger range of iterations for which it does not yet oversmooth the signal, and hence where the error is smaller than the error for $Q_2^\ell$. In this sense, the sequence $Q_3^\ell$ is less sensitive to the number of iterations $\ell$ chosen by the user, and hence more rubust to the specification of this parameter.


%

\section{Conclusion}
\label{sec-conclusion}

We have proven several properties of the Laplacian pyramids extension algorithm. Based on the factorization formula from Proposition \ref{prop-factor}, we showed that the method always converges to an interpolator of the observed data if the kernel bandwidths drop below a certain threshold. We also proved a stability estimate for the extension, which exhibits similar qualitative behavior as prior estimates from \cite{demarchi2010stability} for classical kernel interpolation methods.

We also considered iterating the truncated LP kernels to denoise signals by non-local means. A scheme of this kind for a two-step kernel was proposed in \cite{singer2009diffusion}. Here, we have shown on numerical examples that using higher-step kernels may be advantageous, as they are less sensitive to the number of iterations chosen by the user, and may also achieve lower error overall with an optimal number of iterations. In future work, we plan to further explore the properties and behavior of these denoising kernels.

\section*{Acknowledgements}
I acknowledge support from the NSF BIGDATA program, IIS 1837992.

\bibliographystyle{plain}
\bibliography{refs}

\end{document}